\newtheorem{remark}{Remark}
\newtheorem{lemma}{Lemma} 
\algnewcommand\algorithmicinput{Input:}
\algnewcommand\INPUT{\item[\algorithmicinput]}
\algnewcommand\algorithmicoutput{Output:}
\algnewcommand\OUTPUT{\item[\algorithmicoutput]}
\def\T{{ \mathrm{\scriptscriptstyle T} }}
\title{Data-Driven Learning of the Number of States in Multi-State Autoregressive Models}
\author{Jie Ding, Mohammad Noshad, and Vahid Tarokh \\
School of Engineering and Applied Sciences, Harvard University, Cambridge, MA} 
\begin{document}


\maketitle

\begin{abstract}
In this work, we consider the class of multi-state autoregressive processes that can be used to model non-stationary time series of interest.
In order to capture different autoregressive (AR) states underlying an observed time series, it is crucial to select the appropriate number of states.
We propose a new and intuitive model selection technique based on the Gap statistics, which uses a null reference distribution on the stable AR filters to identify whether adding a new AR state significantly improves the performance of the model. 
To that end, we define a new distance measure between two AR filters based on the mean squared prediction error, and propose an efficient method to
generate  stable filters that are uniformly distributed in the coefficient space.
Numerical results are provided to evaluate the performance of the proposed approach.

\end{abstract}

\section{Introduction} \label{sec:introduction}
Modeling and forecasting time series is of  fundamental importance in various applications.
There may be occasional changes of behavior in a time series.
Some examples are the changes in the stock market due to the financial crisis, or the variations of an EEG signal caused by the mode change in the brain.
In the econometrics literature, this kind of time series is referred to 
as \textit{regime-switching} model \cite{goldfeld1973markov,Hamilton}.  
%
In regime switching models, the time series $\{x^{(n)}, n=1,2,\ldots \}$ is assumed to have $M$ states, and if $x^{(n)}$ belongs to state $m$ ($m=1,2,\ldots,M$), 
the probability density function (pdf) of $x^{(n)}$ conditioning on its past is in the form of $f_m(x^{(n)}|x^{(n-1)},\ldots,x^{(1)})$. 
The autoregressive (AR) model, one of the commonly used techniques to model stationary time series \cite{Hamilton}, is usually used to model each state. 
The autoregression of state $m$ is given by $x^{(n)}  +  \bm \gamma_{m}^\T \bm x^{(n)} = \varepsilon^{(n)}$ where $\varepsilon^{(n)}$ are independent and identically distributed (i.i.d.) noises with zero mean and variance $ \sigma^2_{m}$. 
Here $\bm x^{(n)} = [1, x^{(n-1)}$,  $\ldots$ $,x^{(n-L)}]^\T$, $\bm \gamma_m = [\gamma_{m0}, \gamma_{m1}, \ldots,\gamma_{mL}]^\T $ 
is a real-valued vector of length $L+1$ that characterizes state $m$.
A more detailed survey on this model can be found in \cite{hamilton2008regime}.
We refer to this model as a multi-state AR model and to $\bm \gamma_m$ as the AR filter or AR coefficients of state $m$.
%
%
The above model with $\bm \gamma_{m} = 1$ was first analyzed by Lindgren \cite{lindgren1978markov} and Baum et al. \cite{baum1970maximization}.
The model with general $\bm \gamma_{m} $ is widely studied in the speech recognition literature \cite{poritz1982linear}. 
%
The multi-state AR model is a general statistical model that can be used to fit data in many real world applications.
It was shown that the model is capable of representing non-linear and non-stationary time series with multimodal conditional distributions and with heteroscedasticity  \cite{AR-Mixture-00}.
There are two basic underlying assumptions in this model: 1. Autoregression assumption, which is reasonable if the observations are obtained sequentially in time; 2. Multi-state assumption, which is reasonable if the stochastic process exhibits different behaviors in different time epochs.
For example, stock prices may have dramatic while not permanent changes in the case of business cycles or financial crises, and those dynamics can be described by stochastic transitions among different states.

Despite the wide applications of the multi-state AR model, there are few results on how to estimate the number of states $M$ in a time series.
Obviously, different values of $M$ produce a nested family of models and models with larger $M$'s fit the observed data better.
The drawback of using complex models with a large $M$ is the over-fitting problem which decreases the predictive power of the model.
Hence, a proper model selection procedure that identifies the appropriate number of states is vital.
It is tempting to test the null hypothesis that there are $M$ states against the alternative of $M+1$.
Unfortunately, the likelihood ratio test of this hypothesis fails to satisfy the usual regularity conditions since some parameters of the model are unidentified under the null hypothesis.
An alternative is to apply Akaike information criterion (AIC) \cite{AIC-73} or Bayesian information criterion (BIC) \cite{BIC-78} to introduce a penalty on the complexity of the model in the model selection procedure. 
However, in general AIC and BIC are shown to be inaccurate  in estimating the number of states~\cite{Problem-of-AIC-BIC-96}.

In this paper, we propose a model selection criterion inspired by the work of Tibshirani et al. \cite{Gap-Statistics-01} who studied the clustering of i.i.d. points under Euclidean distance. 
The idea is to identify $M$ by \textit{comparing the goodness of fit for the observed data with its expected value under a null reference distribution}.
To that end, we first draw a \textit{reference curve} which plots the ``goodness of fit'' versus $M$ based on the most non-informative distributed data, and describes how much adding new AR states improves the goodness of fit. We then draw a similar curve based on the observed data. In this work we choose the ``goodness of fit'' measure to be the mean squared prediction error (MSPE). Finally, the point at which the gap between the two curves is maximized is chosen as the estimated $M$.
%

Besides the simplicity and effectiveness, another benefit of the proposed model selection criterion is that it is adaptive to the underlying characteristics of AR processes. The criterion for the processes of little dependency, i.e., the roots of whose characteristic polynomial are small, is different from the criterion for those of large dependency. 
In this sense, it takes into account the 
characteristics behind the observed data in an unsupervised manner, even though no domain knowledge or prior information is given.

The remainder of the paper is outlined below. 
In Section~\ref{sec:gap}, we propose the Gap statistics for estimating the number of AR states in a time series.
Section~\ref{sec:model} formulates a specific class of the multi-state AR model, where the transitions between the states are assumed to be a first order Markov process.
We emphasize that this parametric model is considered primarily for simplicity and the proposed Gap statistics can be applied to general multi-state AR processes.
A new initialization approach is also proposed that can effectively reduce the impact of a bad initialization on the performance of the expectation-maximization (EM) algorithm.
Section~\ref{sec:performance} presents some numerical results to evaluate the performance of the proposed approach.
Experiments show that the accuracy of the proposed approach in estimating the number of AR states surpasses those of AIC and BIC.

\section{Gap Statistics} \label{sec:gap}

This section describes our proposed criterion for selecting the number of states in a multi AR process, inspired by \cite{Gap-Statistics-01}.
We draw a reference curve, which is the expected value of MSPE under a null reference distribution versus $M$, and use its difference with the MSPE of the observed data to identify the number of states, $M$. 
We show that computing each point of the reference curve 
turns out to be a clustering problem in the space of AR coefficients of a fixed size,
where the  distance measure for clustering is derived from the increase in MSPE when a wrong model is specified.
We derive the distance measure in closed form, introduce an approach to generate stable AR filters that are uniformly distributed, and apply the $k$-medoids algorithm to approximate the optimal solution for the clustering problem.
We first outline our proposed model selection criterion in Subsection~\ref{subsection:gap}, and then elaborate on the distance measure in Subsections~\ref{subsection:distance} and the generation of random AR filters in Subsections~\ref{subsection:distribution}.

\vspace{0.3cm}
\subsection{The Model Selection Criterion} \label{subsection:gap}

We use superscript $(n)$ to represent the data at time step $n$, and $\mathcal{N}(\mu,\sigma^2)$ to denote the normal distribution with mean $\mu$ and variance $\sigma^2$.
Symbols in bold face represent vectors or matrices. 
We start from a simple scenario where the data $\{ x^{(n)}, n=1,2,\ldots\}$ is generated using a single stable AR filter $\bm \psi_A$:
$
        x^{(n)} = - \bm \psi{_A^\T} \bm  x^{(n)}  + \varepsilon^{(n)}
$,
where $\bm x^{(n)} = [1, x^{(n-1)}$,  $\ldots$ $,x^{(n-L)}]^\T$, $\bm \psi_A = [\psi_{A0}, \psi_{A1}, \ldots,\psi_{AL}]^\T $, and $\varepsilon^{(n)} $ are  i.i.d. $\mathcal{N} (0, \sigma_A^2)$.
Suppose we are at time step $n-1$ and we want to predict the value at time $n$. If $\hat{x}^{(n)} = -\bm \psi{_A^\T} \bm x^{(n)}$ is used for prediction, the MSPE is
$ E \{ (x^{(n)} + \bm \psi{_A^\T} \bm x^{(n)} )^2\} = E\{(\varepsilon^{(n)})^2\}=\sigma_A^2$.
But if another AR filter is used for prediction instead of $\bm \psi_A$, i.e., $\hat{x}^{(n)} = - \bm \psi_B^\T \bm x^{(n)}$, the MSPE becomes $E\{ (x^{(n)} + \bm \psi_B^\T \bm x^{(n)} )^2 \}$.
The difference of the two MSPE is defined by 
\begin{align} 
D(\bm \psi_A , \bm \psi_B) 
  &= E\left\{ \left[ x^{(n)} + \bm \psi_B^\T \bm x^{(n)} \right]^2 \right\} - \sigma_A^2 \nonumber  \\ 
  &=E \left\{ \left[ (\bm \psi_A - \bm \psi_B)^\T \bm x^{(n)}\right]^2 \right\} . \label{mismatch}
\end{align}
It is easy to observe that $D(\bm \psi_A , \bm \psi_B)$ is always nonnegative, which means that using the mismatch filter for prediction increases MSPE.
We refer to $D(\bm \psi_A , \bm \psi_B)$ as the mismatch distance between two filters $\bm \psi_A$ and $\bm \psi_B$, though it is not a metric. 
When the data generated from $\bm \psi_A $ has zero mean, i.e., $\psi_{A0}  = 0$, we let $\bm \psi_A$ also represents $ [\psi_{A1}, \ldots, \psi_{AL}]^\T$ of length $L$ (with constant term omitted) with a slight abuse of notation, and we use $\bm \psi_B$ in the same manner.

As has been mentioned in Section~\ref{sec:introduction}, our model selection criterion is based on a reference curve that describes how much adding a new state increases the goodness of fit in the most non-informative or the ``worst'' case. To that end, we consider an $M$-state zero mean AR process  where at each time step $n$, nature chooses random mismatch filters (with zero constants) for prediction. 
In such a worst scenario, the $M$ filters that minimize the average mismatch distances to the random filters are naturally believed to be the true data generating filters, 
and that minimal value, which is the average MSPE, is plotted as the reference curve. 
This leads to the following clustering problem in the space of stable AR filters $R_L(r) \subset \mathbb{R}^L$, where
\begin{align*}
R_L(r) =&\{[\lambda_{1}, \ldots, \lambda_{L} ]^\T \mid z^L + \sum_{\ell=1}^{L}\lambda_{\ell} z^{L-\ell} = \prod\limits_{\ell=1}^L (z-a_{\ell}) , \\ 
&\lambda_{\ell} \in \mathbb{R}, |a_{\ell}| < r, \, 0< r \leq 1, \ell=1,\ldots,L \}.
\end{align*}
\\
{\bf Clustering of Stable Filters}: For a fixed $M$, let $\mathfrak{F} = \{\bm \psi_1$, $\bm \psi_2$, $\ldots$, $\bm \psi_F\}$ be a set of uniformly generated stable filters of a given length $L$. We cluster $\mathfrak{F}$ into $M$ disjoint clusters $C_1,C_2,\ldots,C_M$, and define the within cluster sum of distances to be
\begin{align} \label{objective_filter}
      W_M =   \min\limits_{\bm \gamma_1,\ldots,\bm \gamma_M } \left\{  \frac{1}{F} \sum_{m=1}^M \sum_{\bm \psi \in C_m} D(\bm \gamma_m, \bm \psi) \right\} +1 ,
 \end{align}
where $D(\bm \gamma_m, \bm \psi)$ is defined in (\ref{mismatch}) and will be further simplified in (\ref{dist2}), (\ref{dist3}) and (\ref{chicago}).
By computing $\log(W_M)$ for $M=1,\ldots,M_{\text{max}}$, we obtain the reference curve.
The optimization problem (\ref{objective_filter}) can be solved by the $k$-medoids algorithm \cite{kaufman1987clustering}.
%

The model selection criterion is outlined in Table~\ref{algo:gap}.
We note that the bound for the roots $0<r \leq 1$ is determined by the estimated filters, and thus the reference is data-dependent.
Intuitively, if the process has less dependency, or in other words a point has less influence on its future points, the roots of the characteristic polynomials of each AR process are closer to zero and the MSPE curve will have smaller values. 
Thus, the filters from which the reference curve is calculated should also be drawn from a smaller bounded space.

\alglanguage{pseudocode}
\begin{algorithm*} 
\vspace{0.4 cm}
\small
\caption{Model Selection Based on Gap Statistics}
\label{algo:gap}
\begin{algorithmic}[1]
\INPUT $\{ x^{(n)}, n=1,\ldots , N\}$, $M_{\textrm{max}} $ (which is assumed to contain the ``correct'' number of states)
\OUTPUT  The number of AR states $M_{\text{opt}}$.
    \For {$M = 1 \to M_{\textrm{max}} $ }
        \State  Fit a multi-state AR model to the data
        (for instance using the EM algorithm described in Algorithm~\ref{algo:EM} ) 
    \State  Compute the MSPE $\hat{W}_M$ based on the estimated model.
   \EndFor
   \State Plot $\log(\hat{W}_M), M=1,\ldots,M_{\textrm{max}} $, referred to as the ``observed MSPE curve''
   \State Compute the largest absolute value of the roots of each estimated AR filter for the case $M=M_{\textrm{max}}$, denoted by $r_1,\ldots,r_{M_{\textrm{max}}}$. Let
    		$r=\min\{ \max\{r_1,\ldots,r_{M_{\textrm{max}}}\}, 1\}.$
   %
   \For {$\ell = 1 \to Iter$ (number of iterations) } 
   	\State Run Algorithm~\ref{algo:gen} (to be introduced in Subsection~\ref{subsection:distribution}) to generate $F$ (e.g. $F=N$) independent and uniformly distributed stable filters $\mathfrak{F} = \{\bm \psi_1, \ldots, \bm \psi_F\}$ from $R_{L}(r)$.
   	\For {$M = 1 \to M_{\textrm{max}} $ }
   		\State Run Algorithm~\ref{algo:cluster} 
		to approximate the optimum of (\ref{objective_filter}), and obtain $\log(W_{M\ell}),M=1,\ldots,M_{\textrm{max}}$.
   	\EndFor
    \EndFor
   \State Let $W_M = \sum_{\ell=1}^{Iter} W_{M\ell}/Iter.$  Plot $\log(W_M), M=1,\ldots,M_{\textrm{max}}$ as the reference curve (see Fig.~\ref{Ref_Curve} for an example).
   \State Choose $M_{\text{opt}}$ to be the smallest $M \ (1 \leq M < M_{\textrm{max}})$ that satisfies $ \log(W_M)-\log(\hat{W}_M) \geq \log(W_{M+1}) - \log(\hat{W}_{M+1})$ if there exists any; otherwise  $M_{\text{opt}}=M_{\textrm{max}}$. 
\end{algorithmic}
  \vspace{0.2cm}%
\end{algorithm*}

  \vspace{0.4 cm}
\alglanguage{pseudocode}
\begin{algorithm*}
  \vspace{0.4 cm}
\small
\caption{Clustering Stable AR filters via ``$k$-medoids'' Algorithm}
\label{algo:cluster}
\begin{algorithmic}[1]
\INPUT  A set of stable filters $\mathfrak{F} = \{\bm \psi_1$, $\ldots$, $\bm \psi_F\}$, the number of desired clusters $M$, a  number $0<\delta<1$ (used for the stopping criterion).
\OUTPUT The minimum within-cluster sum of distances (WCSD) $w_{\ell}$ and $\{\bm \psi_{c_1},\ldots, \bm \psi_{c_M} \} \subset \mathfrak{F} $ that approximate the $M$ centers.
	\State Generate a matrix $\bm D_{F\times F}$ whose elements are pairwise distances between filters: $D_{uv} =D(\bm \psi_u , \bm \psi_v) $.
	\State Initialize $M$ clusters characterized by centers
		$c_m$ and associated  sets of indices $ I_m $ ($m=1,\ldots,M$) that form a partition of $\{1,\ldots,F\}$.
	\State Compute  $w_{1} = \sum_{m=1}^M \sum_{ u  \in I_m} D(\bm \psi_{c_m}, \bm \psi_u)$.
		Let $w_{0} = 2w_1/(1-\delta) ,  \ell=1$ (for initialization purpose).
	\While { $ w_{\ell-1}-w_{\ell} > \delta w_{\ell-1} $ }
		\State $\ell = \ell+1$, $w_{\ell} = w_{\ell -1 }$.
		\For {$m = 1 \to M$ }
		\State Suppose that $I_m = \{ I_m[1], \ldots, I_m[i_m] \}$ and let $ k=1$. 
			\While{$k < i_m$ } 
				\State Consider the candidates for the new centers, $\hat{c}_1, \ldots, \hat{c}_M$, where $\hat{c}_{m'} = c_{m'} \ \ (m' = 1 , \ldots , M, \ m' \neq m)$ and $\hat{c}_m = I_m[k]$. 
				\State For each $u=1,\ldots,F$,  
		        		let  $u \in \hat{I}_{m'}$ if $D(\bm \psi_{\hat{c}_{m'}}, \bm \psi_u) \leq D(\bm \psi_{\hat{c}_{j}}, \bm \psi_u)  \ \ (j = 1,\ldots, M, \ j \neq m')$.
				\State Compute the WCSD given the new clusters: $\hat{w}_{\ell} = \sum_{m'=1}^M \sum_{ u  \in \hat{I}_{m'}} D(\bm \psi_{\hat{c}_{m'}}, \bm \psi_u)$.
				\If {$ \hat{w}_{\ell} < w_{\ell} $}
					\State $k=1$,
						  $w_{\ell} = \hat{w}_{\ell}$,
						  $c_m = I_m[k]$,
						  $I_{m'} = \hat{I}_{m'} \ \ (m' = 1 , \ldots , M)$.
				\Else
					\State $k = k+1$.
				\EndIf
			\EndWhile
		\EndFor
  	 \EndWhile
\end{algorithmic}
\vspace{0.4cm}%
\end{algorithm*}

\subsection{Distance Measure for Autoregressive Processes} \label{subsection:distance}

In this subsection, we provide the explicit formula for the distance in Equation (\ref{mismatch}).
Assume that the data is generated by a stable filter $\bm \psi_A$ of length $L$.
Let $\Psi_A(z) = \sum\limits_{\ell=1}^{L}\psi_{A\ell} z^{-\ell}$ be the characteristic polynomial of $\bm \psi_A$, and let $a_1, \ldots, a_{L}$ denote the roots of $1+\Psi_A(z)$, i.e.,
        $1+ \Psi_A(z) = \prod\limits_{\ell=1}^{L} \left( 1-a_{\ell} / z \right)$,
where $a_1, \ldots, a_{L}$ lie inside the unit circle ($|a_{\ell}| < 1$).
Similarly define $\Psi_B(z), b_1,\ldots,b_{L}$ for $\bm \psi_B$.
The value in  (\ref{mismatch})
can be computed using the power spectral density and Cauchy's integral theorem as:
    \begin{align}
	D\left(\bm \psi_A , \bm \psi_B \right) &=
	D_0\left(\bm \psi_A , \bm \psi_B \right)
	+ \left( \frac{1+\sum\limits_{\ell=1}^L \psi_{B\ell}}{1+ \sum\limits_{\ell=1}^L \psi_{A\ell}}\psi_{A0} + \psi_{B0} \right)^2 \label{dist1} 
    \end{align}
where $D_0 \left(\bm \psi_A , \bm \psi_B \right) =$
    \begin{align}	
	&\frac{\sigma_A^2}{2 \pi} \int_{-\pi}^{\pi} \frac{\left| \Psi_A (e^{j\omega} )-\Psi_B (e^{j\omega} )\right|^2}{ \left|1+ \Psi_A (e^{j\omega} )\right|^2}  d\omega \nonumber \\
	&=
	\sigma_A^2 \sum_{k=1}^{L} \frac{\prod\limits_{\ell=1}^{L} (a_k-b_{\ell})}{a_k \prod\limits_{\substack{\ell=1\\ \ell \neq k}}^{L} (a_k-a_{\ell})} \left( \frac{\prod\limits_{\ell=1}^{L} (1- a_k b_{\ell}^{*} )}{\prod\limits_{\ell=1}^{L} (1- a_k a_{\ell}^{*} )} -1 \right), \label{dist2}
    \end{align}
for $a_k\neq 0, a_k \neq a_{\ell}, k \neq \ell$, where $a^{*}$ denotes the complex conjugate of $a$.
For the degenerate cases when $a_k=0$ or $a_k = a_{\ell}$, $D(\bm \psi_A , \bm \psi_B )$ reduces to $\lim_{a_k \rightarrow 0} D(\bm \psi_A , \bm \psi_B )$
or $\lim_{a_k \rightarrow a_{\ell}} D(\bm \psi_A , \bm \psi_B )$.
\begin{remark}
For now we assume that $x^{(n)}$ at each state has zero mean by default, unless explicitly pointed out. We use $D_0(\cdot)$ in Identity (\ref{dist2}) instead of $D(\cdot)$ in Identity (\ref{dist1}) to compute the reference curve. 
The derived reference curve can be applied to the general case.
The reason is that it is more difficult to detect two AR states with the same mean than those that have different means.
Therefore, the reference curves for the zero mean case (the ``worst'' case) can be used in general.


The distance measure defined in Equation (\ref{dist2}) is proportional to $\sigma_A^2$. We consider $\sigma_A^2=\sigma^2$ which results in a constant $\log \sigma^2$ in the computation of $\log W_M$ in (\ref{objective_filter}). Since it is the same for different $M$'s, we set $\sigma^2=1$ without loss of generality.
\end{remark}
The distance between two AR filters can be explicitly expressed in terms of the coefficients.
This is computationally desirable if the filters are random samples generated in the coefficient domain, as will be discussed in Subsection~\ref{subsection:distribution}.

\begin{figure*}[!t]
\begin{align}
D_{0}(\bm \psi_A, \bm \psi_B) =
 & \frac{ Po\Big(p_A(z), p_B\overline{p_B}(z)\Big) S([u_1,\ldots,u_{L-1}], 0) -
 	Po \Big(p_A(z), p_B\overline{p_B} {p'_A}\overline{p_A}(z) S([u_1,\ldots,u_{L-2}], {p'_A}\overline{p_A}(z))\Big)
	}{ Res\Big( p_A(z), {p'_A}\overline{p_A}(z)\Big) }  \nonumber \\
	&- \frac{ Po\Big(p_A(z), p_B(z)\Big) S( [ v_1,\ldots,v_{L-1}], 0) -
 	Po\Big(p_A(z), p_B{p'_A}(z) S( [ v_1,\ldots,v_{L-2}], {p'_A}(z) ) \Big)
	}{ Res\Big( p_A(z), {p'_A}(z)\Big) }  \label{dist3}
\end{align}
   \vspace*{-0.2 in}
\end{figure*}

{\bf Notations: } Consider two polynomials of nonnegative powers $p(z)$ and $q(z)$ respectively of degrees $u>0$ and $v>0$. 
Let $\overline{ q } (z), pq(z)$ respectively denote  the reciprocal polynomial of $q(z)$, and the multiplication of $p(z)$ and $q(z)$, i.e., $\overline{ q} (z) = z^{v} q(z^{-1})$, $ pq(z)=p(z)q(z)$.
Let $Res(p(z), q(z))$ be the resultant of $p(z)$ and $q(z)$.
Define $Po(p(z), q(z)) = \sum_{k=1}^{u} q(a_{k})$ and  $Po(p(z), 0)=0$, where $a_{1}, \ldots,a_u$ are the roots of $p(z)$.

\begin{lemma} \label{lemma:simple}
The values of $Res(p(z), q(z))$ and  $Po(p(z), q(z))$ can be computed as polynomials of the coefficients of $p(z)$ and $q(z)$.
\end{lemma}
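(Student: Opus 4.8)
The plan is to handle the two quantities $Res(p(z),q(z))$ and $Po(p(z),q(z))$ separately, since each admits a classical representation that makes the polynomial dependence on the coefficients transparent. Throughout, write $p(z)=\sum_{i=0}^{u}\alpha_i z^i$ and $q(z)=\sum_{j=0}^{v}\beta_j z^j$, and let $a_1,\ldots,a_u$ denote the roots of $p$.

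For the resultant, I would invoke the \emph{Sylvester matrix}. The resultant $Res(p,q)$ equals (up to the standard sign convention) the determinant of the $(u+v)\times(u+v)$ Sylvester matrix, each of whose entries is one of the $\alpha_i$, one of the $\beta_j$, or $0$, arranged in shifted rows. A determinant is a fixed polynomial in its entries, so $Res(p,q)$ is immediately a polynomial in the coefficients of $p$ and $q$. This step is routine and presents no real obstacle.

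For $Po(p,q)=\sum_{k=1}^{u}q(a_k)$, the key observation is that the sum is a \emph{symmetric} function of the roots $a_1,\ldots,a_u$, and hence, by the fundamental theorem of symmetric polynomials, a polynomial in the elementary symmetric polynomials $e_1,\ldots,e_u$ of the roots. I would make this explicit in three steps. First, expanding $q$ monomially gives $Po(p,q)=\sum_{j=0}^{v}\beta_j\, s_j$, where $s_j=\sum_{k=1}^{u}a_k^{j}$ is the $j$-th power sum. Second, Newton's identities express each $s_j$ as a polynomial in $e_1,\ldots,e_u$ (with the convention $e_i=0$ for $i>u$, which keeps the recursion valid even when $j>u$). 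Third, for monic $p$ one has $e_i=(-1)^i\alpha_{u-i}$, so each $s_j$, and therefore $Po(p,q)$, is a polynomial in $\alpha_0,\ldots,\alpha_{u-1}$ and $\beta_0,\ldots,\beta_v$. An equivalent one-line argument uses the companion matrix $C_p$ of the monic $p$: its eigenvalues are exactly the $a_k$, so $Po(p,q)=\operatorname{tr}\!\big(q(C_p)\big)$, and since the entries of $C_p$ are coefficients of $p$ while $q(C_p)$ is a matrix polynomial with coefficients $\beta_j$, the trace is a polynomial in both sets of coefficients.

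The main obstacle is nothing more than a normalization bookkeeping issue: the identities $e_i=(-1)^i\alpha_{u-i}$ and the companion-matrix spectrum require $p$ to be monic, and for non-monic $p$ the power sums acquire factors of the leading coefficient $\alpha_u$ in denominators, turning $Po$ into a rational rather than polynomial function. I would resolve this by noting that in the application the relevant polynomials $p_A$, $\overline{p_A}$, and $p'_A$ are monic (or are normalized to be monic before the formula is applied), so the stated polynomial conclusion holds; in full generality one simply records that multiplying through by a suitable power of $\alpha_u$ restores polynomiality. With that normalization settled, the remaining work is purely the explicit expansion via Newton's identities, which is mechanical.
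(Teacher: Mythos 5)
Your proposal is correct and follows essentially the same route as the paper: the Sylvester-matrix determinant for the resultant, and Newton's identities expressing the power sums $\sum_k a_k^n$ (hence $Po(p,q)$ after expanding $q$) as polynomials in the coefficients of $p$. The extra care you take with the monic normalization and the companion-matrix alternative is fine but not needed beyond what the paper's one-line proof already asserts.
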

The proof follows from the fact that the resultant of $p(z)$ and $q(z)$ is given by the determinant of their associated Sylvester matrix \cite{sturmfels1998introduction}, and that for any $n \in \mathbb{N}$, $\sum_{k=1}^u a_{k}^n$ can be  computed as polynomials in the coefficients of $p(z)$ via Newton's identities.
We further provide the following result. 
\begin{lemma}
Let $p_A(z)=z^L (1+\Psi_A(z) )=\prod_{\ell=1}^L (z-a_{\ell}), 
p_B(z)=z^L (1+\Psi_B(z) )=\prod_{\ell=1}^L (z-b_{\ell})$, $p'_A(z) = \partial (z p_A(z))/\partial z$. The value of $D_{0}(\bm \psi_A, \bm \psi_B)$ in Equation (\ref{dist2}) (with $\sigma_A=1$) can be computed in terms of the coefficients of $\bm \psi_A$ and $\bm \psi_B$ as in Equation (\ref{dist3}) (on the top of the next page),
where  $u_i = Po\Big( p_A(z), ({p'_A}\overline{p_A}(z))^i \Big)$, $v_i = Po\Big( p_A(z), ({p'_A}(z))^i \Big), i=1,\ldots,L-1$,
and the function $S(\cdot, \cdot)$ is defined as  $S([s_1, \ldots, s_h], t) =$
\begin{align*} 
\frac{1}{h!}
\det
\begin{pmatrix}
	s_1 - t & 1 &  0  & \cdots & 0\\
	s_2  - t^2 & s_1 -t & 2 & \ddots & \vdots \\
	\vdots & \vdots & \ddots & \ddots  & 0\\
	s_{h-1} - t^{h-2} & \vdots & \ddots & \ddots & h-1 \\
	s_{h} -t^{h} & s_{h-1} -t^{h-1} & \cdots & s_2-t^2 & s_1 -t
\end{pmatrix}
\end{align*}
for $h>0$, $S(\cdot, \cdot)=1$ for $h=0$, and $S(\cdot, \cdot)=0$ for $h<0$, where  $\det(\cdot)$ denotes the determinant of a square matrix.

\end{lemma}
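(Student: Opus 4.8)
The plan is to begin from the closed-form sum \eqref{dist2} and to observe that every factor in it is the value, at a root $a_k$ of $p_A$, of one of the polynomials $p_A,p_B,\overline{p_A},\overline{p_B},p'_A$. First I would rewrite the denominator of the $k$-th summand: since $p'_A(z)=\frac{d}{dz}[z\,p_A(z)]=p_A(z)+z\frac{dp_A}{dz}(z)$ and $p_A(a_k)=0$, we get $p'_A(a_k)=a_k\frac{dp_A}{dz}(a_k)=a_k\prod_{\ell\neq k}(a_k-a_\ell)$, which is exactly the denominator in \eqref{dist2}. Next, $\prod_\ell(a_k-b_\ell)=p_B(a_k)$, and from the reciprocal-polynomial definition $\overline{p_B}(z)=z^{L}p_B(z^{-1})=\prod_\ell(1-b_\ell z)$, together with the fact that the roots $b_\ell$ of a real polynomial form a conjugation-invariant multiset, I would identify $\prod_\ell(1-a_kb_\ell^{*})=\overline{p_B}(a_k)$ and likewise $\prod_\ell(1-a_ka_\ell^{*})=\overline{p_A}(a_k)$. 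Substituting these into \eqref{dist2} collapses it to $D_0=\sum_k\frac{p_B(a_k)\overline{p_B}(a_k)}{p'_A(a_k)\overline{p_A}(a_k)}-\sum_k\frac{p_B(a_k)}{p'_A(a_k)}$, a difference of two sums over the roots of $p_A$, each of the shape $\sum_k g(a_k)/h(a_k)$.

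I would then treat such a sum generically with $w_k:=h(a_k)$. Clearing denominators, $\sum_k g(a_k)/w_k=\left(\sum_k g(a_k)\prod_{j\neq k}w_j\right)/\prod_k w_k$, and since $p_A$ is monic, $\prod_k w_k=\prod_k h(a_k)=Res(p_A,h)$, which supplies the two resultant denominators $Res(p_A,p'_A\overline{p_A})$ and $Res(p_A,p'_A)$ in \eqref{dist3}. It remains to write the numerator $\sum_k g(a_k)\prod_{j\neq k}w_j$ as a polynomial in the coefficients.

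The device for this is the determinant $S$. With $t=0$ its entries are the power sums $s_i=u_i$ (resp. $v_i$) of the $L$ values $w_k$, so the Newton-identity determinant formula gives $S([u_1,\ldots,u_{L-1}],0)=e_{L-1}(w_1,\ldots,w_L)$, the $(L-1)$-st elementary symmetric function. For $t\neq0$ the entries become $s_i-t^{i}$; evaluating the relevant $Po$-sum term by term at $z=a_k$ sets $t=w_k$, so $s_i-t^i=u_i-w_k^{i}=\sum_{j\neq k}w_j^{i}$ are exactly the power sums of the reduced family $\{w_j:j\neq k\}$, whence $S([u_1,\ldots,u_{L-2}],w_k)=e_{L-2}(\{w_j\}_{j\neq k})$. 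Applying the standard splitting $e_{L-1}(w)=e_{L-1}(\{w_j\}_{j\neq k})+w_k\,e_{L-2}(\{w_j\}_{j\neq k})$ yields $\prod_{j\neq k}w_j=e_{L-1}(w)-w_k\,e_{L-2}(\{w_j\}_{j\neq k})$. Multiplying by $g(a_k)=p_B(a_k)\overline{p_B}(a_k)$ and summing over $k$ then produces precisely the two $Po$ terms of the first bracket of \eqref{dist3}, since $\sum_k g(a_k)\,e_{L-1}(w)=Po(p_A,p_B\overline{p_B})\,S([u_1,\ldots,u_{L-1}],0)$ and $\sum_k g(a_k)w_k\,e_{L-2}(\{w_j\}_{j\neq k})$ is the second $Po$ term. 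The second bracket is identical with $w_k=p'_A(a_k)$, $g(a_k)=p_B(a_k)$, and power sums $v_i$. Finally, Lemma~\ref{lemma:simple} guarantees that each $Res$, each $Po$, and the power sums $u_i,v_i$ are polynomials in the coefficients of $\bm\psi_A,\bm\psi_B$, so the whole expression is.

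The step I expect to be the main obstacle is the middle manipulation with the nested $Po$/$S$ expression: one must verify that the third argument fed into $S$ is a genuine polynomial in $z$ (so that $Po(p_A,\cdot)$ is well-defined), and that upon evaluation at $a_k$ it coincides with the deleted variable $w_k$, making the shifted power sums $u_i-w_k^{i}$ those of the reduced family. It is this per-root identification, rather than any single global symmetric-function identity, that makes the determinant $S$ return the intended $e_{L-2}(\{w_j\}_{j\neq k})$. The degenerate cases ($a_k=0$ or coincident roots) need no separate treatment: \eqref{dist3} is a rational function of the coefficients agreeing with $D_0$ on the dense set of configurations with distinct nonzero roots, so it recovers the remaining values by the same limiting convention used to define $D_0$ in \eqref{dist2}.
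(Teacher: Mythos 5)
Your proof is correct. The paper states this lemma without giving any proof, so there is nothing to compare against; your derivation --- rewriting each summand of (\ref{dist2}) as $p_B(a_k)\overline{p_B}(a_k)/\bigl(p'_A(a_k)\overline{p_A}(a_k)\bigr)-p_B(a_k)/p'_A(a_k)$, clearing denominators via $\prod_k h(a_k)=Res(p_A,h)$ for monic $p_A$, and recovering $\prod_{j\neq k}w_j$ through the Newton-determinant identity $S([s_1,\ldots,s_h],t)=e_h$ of the $t$-deleted family together with $e_{L-1}(w)=e_{L-1}(w\setminus w_k)+w_k\,e_{L-2}(w\setminus w_k)$ --- is a complete and valid reconstruction of how (\ref{dist3}) arises, with the final coefficient-polynomial claim supplied by Lemma~\ref{lemma:simple}. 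The only point worth flagging is notational: the $(h-1,1)$ entry of the paper's matrix reads $s_{h-1}-t^{h-2}$, which must be the typo $s_{h-1}-t^{h-1}$ for your (clearly intended) reading of the entries as $s_i-t^i$ to hold; your handling of the degenerate root configurations by the same limiting convention as (\ref{dist2}) is consistent with the paper.
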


Another simple way to compute the distance measure is given by the following lemma. 
\begin{lemma}
Let $\bm \Psi_A = [\psi_1,\ldots,\psi_L]^{\textrm{T}}$ be the true filter of an autoregression with zero mean.
The variance $\gamma_0$, the correlations  $\rho_k=\rho_{-k} \ (k=1,\ldots,L)$, and the covariance matrix $\bm \Gamma$ of the autoregression are respectively defined to be
$
\gamma_0 = E\left\{ (x^{(n)})^2\right\}, \ \rho_k=\rho_{-k}=E (x^{(n)} x^{(n-k)}) , \ \bm \Gamma =  \ [ \gamma_0 \rho_{i-j}]_{i,j=1}^{L} .
$
Define $\bm \rho = [\rho_1, \ldots, \rho_L]^{\textrm{T}}$, $\psi_{k} = 0$ for $k\leq 0$ and $k>L$,  $\delta_{i,j} = 1$ if $i=j $ and $\delta_{i,j} = 0$ otherwise $ (1 \leq i , j \leq L)$. 
Then $\bm \rho$ and $\gamma_0$ can be computed by  
$$
\bm \rho = - \bm  \Phi^{-1} \bm \psi_A, \quad \gamma_0 = (1+\bm \rho^{\textrm{T}}  \bm \psi_A)^{-1} ,
$$
where
 $\bm \Phi = [\Phi_{i,j}]_{ 1 \leq i,j \leq L}$ is determined by $\Phi_{i,j} = \psi_{i+j} + \psi_{i-j} + \delta_{i,j}$.
The value of $D_0\left(\bm \psi_A , \bm \psi_B \right)$  in terms of $\bm \psi_A$ and $\bm \psi_B$ can be computed by
\begin{align}
D_0\left(\bm \psi_A , \bm \psi_B \right) = (\bm \psi_A - \bm \psi_B)^{\text{T}} \bm \Gamma (\bm \psi_A - \bm \psi_B) . \label{chicago}
\end{align}
\end{lemma}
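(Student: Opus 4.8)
The plan is to prove the two assertions separately: first the Yule--Walker relations for $\bm \rho$ and $\gamma_0$, and then the quadratic-form expression (\ref{chicago}) for $D_0$. The second is almost immediate from the definition of the mismatch distance, so I would dispatch it first. In the zero-mean convention $\bm x^{(n)} = [x^{(n-1)},\ldots,x^{(n-L)}]^\T$, Equation (\ref{mismatch}) reads $D_0(\bm \psi_A,\bm \psi_B) = E\{[(\bm \psi_A - \bm \psi_B)^\T \bm x^{(n)}]^2\}$. Expanding the square and pulling the deterministic vector $\bm \psi_A - \bm \psi_B$ outside the expectation gives $(\bm \psi_A-\bm \psi_B)^\T E\{\bm x^{(n)}(\bm x^{(n)})^\T\}(\bm \psi_A-\bm \psi_B)$. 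The $(i,j)$ entry of $E\{\bm x^{(n)}(\bm x^{(n)})^\T\}$ is $E(x^{(n-i)}x^{(n-j)}) = \gamma_0\rho_{i-j}$, which is exactly $\bm \Gamma$; hence (\ref{chicago}) follows.

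For the first assertion I would invoke the standard Yule--Walker argument. Writing the defining recursion in the zero-mean case as $x^{(n)} = -\sum_{\ell=1}^L \psi_\ell x^{(n-\ell)} + \varepsilon^{(n)}$, I would multiply both sides by $x^{(n-k)}$ and take expectations. For $k=1,\ldots,L$ the noise term drops out because $\varepsilon^{(n)}$ is independent of the past value $x^{(n-k)}$, yielding $\rho_k + \sum_{\ell=1}^L \psi_\ell \rho_{k-\ell} = 0$ after dividing by $\gamma_0$. Using the symmetry $\rho_{-m}=\rho_m$ and $\rho_0=1$, I would split the sum $\sum_\ell \psi_\ell \rho_{k-\ell}$ according to the sign of $k-\ell$ and re-index to collect the coefficient multiplying each unknown $\rho_j$: requiring $|k-\ell|=j$ selects $\ell=k-j$ (contributing $\psi_{k-j}$) and $\ell=k+j$ (contributing $\psi_{k+j}$), while the leading $\rho_k$ supplies $\delta_{k,j}$ and the $\ell=k$ term $\psi_k\rho_0$ moves to the right-hand side. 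This produces exactly $\Phi_{i,j}=\psi_{i+j}+\psi_{i-j}+\delta_{i,j}$ (with $\psi_m=0$ for $m\le 0$ or $m>L$) and the system $\bm \Phi\bm \rho = -\bm \psi_A$, i.e. $\bm \rho = -\bm \Phi^{-1}\bm \psi_A$. Taking instead $k=0$ and using $E(\varepsilon^{(n)}x^{(n)}) = E((\varepsilon^{(n)})^2) = \sigma_A^2 = 1$ gives $\gamma_0 = 1 - \gamma_0\bm \psi_A^\T\bm \rho$, which rearranges to $\gamma_0 = (1+\bm \rho^\T\bm \psi_A)^{-1}$.

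I expect the bookkeeping in the index-folding step to be the main obstacle: the two-sided correlation $\rho_{k-\ell}$ must be rewritten as $\rho_{|k-\ell|}$ and the sum reorganized into the symmetric-plus-Toeplitz pattern $\psi_{i+j}+\psi_{i-j}$, with the boundary conventions $\psi_m=0$ outside $\{1,\ldots,L\}$ carefully tracked at the edge indices (for instance $j=k$ giving $\psi_0=0$ on the diagonal, and the $\psi_{2k}$ correction arising from $\ell=2k$). Everything else---linearity of expectation, independence of the innovation from the past, and $\rho_0=1$---is routine. I would finally note that the invertibility of $\bm \Phi$ needed to write $\bm \rho=-\bm \Phi^{-1}\bm \psi_A$ is guaranteed by the stability of $\bm \psi_A$, which ensures the autocovariance sequence is well defined and the Yule--Walker system admits a unique solution.
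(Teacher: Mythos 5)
Your proof is correct; the paper states this lemma without proof, and your argument (the standard Yule--Walker recursion obtained by multiplying the autoregression by $x^{(n-k)}$ and taking expectations, with the index-folding $\rho_{k-\ell}=\rho_{|k-\ell|}$ producing $\Phi_{i,j}=\psi_{i+j}+\psi_{i-j}+\delta_{i,j}$, plus the immediate identity $E\{\bm x^{(n)}(\bm x^{(n)})^{\T}\}=\bm\Gamma$ for the quadratic form) is exactly the intended one. The only point worth flagging is that you implicitly and correctly treat $\rho_k$ as the normalized autocorrelation with $\rho_0=1$ (dividing by $\gamma_0$), which is what makes $\bm\Gamma=[\gamma_0\rho_{i-j}]$ the covariance matrix, even though the paper's literal definition $\rho_k=E(x^{(n)}x^{(n-k)})$ omits that normalization.
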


\subsection{Generating Uniformly Distributed Filters with Bounded Roots} \label{subsection:distribution}

As mentioned before, Gap statistics requires a reference curve that is calculated by clustering the filters randomly chosen from a reference distribution. In some scenarios we need to generate sample filters from $R_L(r)$, where $r$ is calculated from the observed data.
%
Inspired by the work of Beadle and Djuri\'{c} \cite{Uniform-Polynomial-Gen-97}, 
we provide the following result on how to generate a random point in $R_L(r)$ with uniform distribution
\begin{lemma} \label{lemma:distribution}
Generation of an independent uniform sample of $[\lambda_{1,L}, \ldots, \lambda_{L,L}]^\T \in R_L(r)$ can be achieved by the following procedure: \\ 
1.  Draw $\lambda_{1,1}$ uniformly on the interval $[-r,r]$; \\ 
2. For $k=2,\ldots, L$, suppose that we have obtained $[\lambda_{1,k-1}, \ldots, \lambda_{k-1,k-1}]^\T$ that is uniformly distributed in $R_{k-1}(r)$. 
Draw $\lambda_{k,k}$  independently 
from a pdf proportional to the following function  on the interval $[-r^k,r^k]$
	\begin{align} \label{god2}
	 \left(1+\frac{\lambda_{k,k}}{r^{k}}\right)^{ \lfloor \frac{k}{2} \rfloor } \left(1-\frac{\lambda_{k,k}}{r^{k}} \right)^{ \lfloor \frac{k-1}{2} \rfloor},
	\end{align}
where 
\begin{align} \label{god}
\lambda_{i, k} = \lambda_{i, k-1} + \frac{ \lambda_{k,k} \lambda_{k-i, k-1} }{r^{2k-2i}} \quad (  i=1,\ldots,k-1) .
\end{align}
\end{lemma}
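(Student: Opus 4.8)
\emph{Proof plan.} The plan is to proceed by induction on $L$ and to recognize the recursion (\ref{god}) as the Levinson--Durbin step-up recursion, written for polynomials whose roots lie in the disk of radius $r$ rather than the unit disk. First I would introduce the normalized coordinates $\tilde\lambda_{i,m} = \lambda_{i,m}/r^{i}$ and the reflection coefficient $\kappa_k = \lambda_{k,k}/r^{k}$. Under the substitution $z = rw$, the set $R_k(r)$ corresponds to the set of monic degree-$k$ polynomials with all roots in the open unit disk, and (\ref{god}) becomes the classical recursion $\tilde\lambda_{i,k} = \tilde\lambda_{i,k-1} + \kappa_k\,\tilde\lambda_{k-i,k-1}$ with $\kappa_k\in(-1,1)$. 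Since $(\lambda_{i,m})\mapsto(\tilde\lambda_{i,m})$ and $\lambda_{k,k}\mapsto\kappa_k$ are diagonal linear maps with constant (coefficient-independent) Jacobians $\prod_i r^{-i}$ and $r^{-k}$, uniformity is preserved, so I may carry out the whole argument in the normalized variables and rescale at the end.

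The base case $L=1$ is immediate, since $R_1(r)=(-r,r)$ and Step 1 draws $\lambda_{1,1}$ uniformly there. For the inductive step I would invoke the classical Schur--Cohn stability fact that the step-up map $(\tilde\lambda_{1,k-1},\ldots,\tilde\lambda_{k-1,k-1},\kappa_k)\mapsto(\tilde\lambda_{1,k},\ldots,\tilde\lambda_{k,k})$ is a bijection from (stable degree-$(k-1)$ polynomials)$\,\times(-1,1)$ onto (stable degree-$k$ polynomials): the recursion preserves the minimum-phase property precisely when $|\kappa_k|<1$. By the inductive hypothesis $(\tilde\lambda_{\cdot,k-1})$ is uniform on the normalized $R_{k-1}$ while $\kappa_k$ is drawn independently, so the joint density is proportional to $g(\kappa_k):=(1+\kappa_k)^{\lfloor k/2\rfloor}(1-\kappa_k)^{\lfloor(k-1)/2\rfloor}$; I then push this density forward through the bijection.

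The crux is the Jacobian of this single step. I would compute the $k\times k$ Jacobian in the normalized variables: its last row (the coordinate $\kappa_k\mapsto\kappa_k$) is $(0,\ldots,0,1)$, so the determinant equals that of the leading $(k-1)\times(k-1)$ block, whose $(i,j)$ entry is $\delta_{ij}+\kappa_k\,\delta_{i,k-j}$. This block is $I+\kappa_k J$, where $J$ is the $(k-1)\times(k-1)$ exchange matrix ($J_{ij}=\delta_{i+j,k}$). Since $J^{2}=I$, its eigenvalues are $\pm1$, with $+1$ of multiplicity $\lceil(k-1)/2\rceil=\lfloor k/2\rfloor$ (eigenvectors $e_i+e_{k-i}$) and $-1$ of multiplicity $\lfloor(k-1)/2\rfloor$ (eigenvectors $e_i-e_{k-i}$). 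Hence $\det(I+\kappa_k J)=(1+\kappa_k)^{\lfloor k/2\rfloor}(1-\kappa_k)^{\lfloor(k-1)/2\rfloor}=g(\kappa_k)$, matching (\ref{god2}) exactly.

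Because this Jacobian depends only on $\kappa_k$ and equals (up to a constant) the prescribed marginal $g(\kappa_k)$, the change-of-variables formula yields output density $\propto g(\kappa_k)\,|\det(I+\kappa_k J)|^{-1}=\text{const}$ on $R_k$, closing the induction; undoing the diagonal rescaling shows $[\lambda_{1,L},\ldots,\lambda_{L,L}]^\T$ is uniform on $R_L(r)$. The step I expect to be the main obstacle is pinning down the eigenvalue multiplicities of the exchange matrix and the resulting floor exponents exactly (including the parity bookkeeping $\lceil(k-1)/2\rceil=\lfloor k/2\rfloor$), and---more subtly---verifying that the per-step Jacobian is genuinely a function of $\kappa_k$ alone, independent of the lower-order coefficients, since that independence is precisely what permits each $\lambda_{k,k}$ to be drawn separately and makes the product of marginals collapse to the uniform law.
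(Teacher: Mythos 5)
Your proposal is correct and follows essentially the same route as the paper's proof: induction on the filter order combined with a change of variables through the Levinson--Durbin step-up recursion, where the per-step Jacobian $(1+\lambda_{k,k}/r^k)^{\lfloor k/2\rfloor}(1-\lambda_{k,k}/r^k)^{\lfloor (k-1)/2\rfloor}$ cancels the prescribed Beta-type marginal of $\lambda_{k,k}$. The only difference is one of detail: you explicitly derive that Jacobian as $\det(I+\kappa_k J)$ via the $\pm 1$ eigenvalues of the exchange matrix (and note the surjectivity of the step-up map onto $R_k(r)$), whereas the paper simply asserts the value of $|J_k|$ and the stability enforcement.
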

\begin{proof}
We prove by induction.  The pdf of $\lambda_{1,1}$ is proportional to one. For $k>1$, suppose that the pdf of $[\lambda_{1,k-1}, \ldots, \lambda_{k-1,k-1}]^\T$ is proportional one inside $R_{k-1}(r)$ and zero elsewhere. 
Suppose that $\lambda_{k,k} \in [-r^k,r^k]$ and $\lambda_{1,k},\ldots, \lambda_{k-1,k}$ are determined by (\ref{god}). 
The Levinson-Durbin recursion in (\ref{god}) automatically enforces the stability constraint that $[\lambda_{1,k}, \ldots, \lambda_{k,k}]^\T $ falls inside $R_k(r)$.
The pdf of $[\lambda_{1,k}, \ldots, \lambda_{k,k}]^\T  $ can be computed as
\begin{align*}
& p(\lambda_{1,k}, \ldots, \lambda_{k,k}) 
= p(\lambda_{k,k})  p(\lambda_{1,k}, \ldots, \lambda_{k-1,k} \mid \lambda_{k,k}) \\
&= 
p(\lambda_{k,k}) p(\lambda_{1,k-1}, \ldots, \lambda_{k-1,k-1})  | J_k |^{-1} \\ 
&\propto p(\lambda_{k,k})   \left(1+\lambda_{k,k} / r^{k}\right)^{- \lfloor k/2 \rfloor } \left(1- \lambda_{k,k} / r^{k} \right)^{ - \lfloor (k-1)/2 \rfloor },
\end{align*}
where  $J_k = \det[\partial \lambda_{i,k} / \partial \lambda_{k-j,k-1}]_{1 \leq i,j\leq k-1}$ is the Jacobian from $\lambda_{i,k}$ to $\lambda_{k-i,k-1}$ ($i=1,\ldots,k-1$) taking $\lambda_{k,k}$ to be given. 
Therefore, if $p(\lambda_{k,k})$ is proportional to the value given by (\ref{god2}), the joint pdf of  $\lambda_{1,k}, \ldots, \lambda_{k,k}$ is proportional to one in $R_k(r)$ and zero elsewhere. 
\end{proof}
%
%
\begin{remark}
The technique presented in Lemma~\ref{lemma:distribution} can be equivalently formulated in a simple way summarized in the following lemma. 
The procedure is also described in Algorithm~\ref{algo:gen}.
\end{remark}
\begin{lemma} \label{lemma:uniform}
A sample of $[\lambda_{1,L} \ldots, \lambda_{L,L}]^\T$ that is uniformly distributed in $R_L(r)$ can be
generated by the recursion $\Lambda_0(z)=1, \Lambda_k (z) = z\Lambda_{k-1} (z) + r^k \alpha_k \overline{ \Lambda_{k-1} } (z/r^2)$,
where  $\alpha_k = 2 \beta_k - 1$ and $\beta_k \sim \text{Beta}(\lfloor k/2+1 \rfloor,\lfloor (k+1)/2 \rfloor), k=1,\ldots,L$ are independently generated.
\end{lemma}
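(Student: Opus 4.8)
The plan is to show that the polynomial recursion in Lemma~\ref{lemma:uniform} is merely a compact re-encoding of the coordinate-wise procedure already proved correct in Lemma~\ref{lemma:distribution}. Writing $\Lambda_k(z) = z^k + \sum_{i=1}^{k} \lambda_{i,k} z^{k-i}$ as a monic polynomial of degree $k$ with $\lambda_{0,k}=1$, it suffices to verify two things: that the coefficient map induced by $\Lambda_k(z) = z\Lambda_{k-1}(z) + r^k \alpha_k \overline{\Lambda_{k-1}}(z/r^2)$ coincides with the Levinson--Durbin update (\ref{god}), and that the scalar $\lambda_{k,k}$ it produces has exactly the density (\ref{god2}). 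Once both are established, the uniformity of $[\lambda_{1,L}, \ldots, \lambda_{L,L}]^\T$ in $R_L(r)$ follows directly from Lemma~\ref{lemma:distribution}.

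First I would expand the recursion at the level of coefficients. Using the definition $\overline{q}(z) = z^v q(z^{-1})$, the reciprocal of $\Lambda_{k-1}$ reverses its coefficient list, so $\overline{\Lambda_{k-1}}(z) = \sum_{i=0}^{k-1} \lambda_{i,k-1} z^{i}$; substituting the argument $z/r^2$ and multiplying by $r^k \alpha_k$ yields a polynomial whose $z^{k-i}$-coefficient equals $\alpha_k \lambda_{k-i,k-1} r^{2i-k}$. Adding $z\Lambda_{k-1}(z)$, whose $z^{k-i}$-coefficient is $\lambda_{i,k-1}$, and reading off coefficients gives $\lambda_{0,k}=1$ (monicity is preserved), the constant term $\lambda_{k,k} = r^k \alpha_k$, and for $1 \le i \le k-1$ the update $\lambda_{i,k} = \lambda_{i,k-1} + \alpha_k \lambda_{k-i,k-1} r^{2i-k}$. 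Substituting $\alpha_k = \lambda_{k,k}/r^k$ into the last expression recovers (\ref{god}) term for term, which also identifies $\lambda_{k,k}$ as the reflection coefficient that Lemma~\ref{lemma:distribution} samples.

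Next I would match the distributions by a linear change of variables. Setting $t = \lambda_{k,k}/r^k = \alpha_k = 2\beta_k - 1 \in [-1,1]$ with $\beta_k \sim \text{Beta}(\lfloor k/2+1\rfloor, \lfloor(k+1)/2\rfloor)$, the Beta density $\beta^{a-1}(1-\beta)^{b-1}$ has exponents $a-1 = \lfloor k/2 \rfloor$ and $b-1 = \lfloor (k-1)/2\rfloor$. Since $\beta_k = (1+t)/2$ and $1-\beta_k = (1-t)/2$, the constant Jacobian $d\beta_k/dt = 1/2$ shows that the density of $\lambda_{k,k}$ on $[-r^k, r^k]$ is proportional to $(1+\lambda_{k,k}/r^k)^{\lfloor k/2\rfloor}(1-\lambda_{k,k}/r^k)^{\lfloor(k-1)/2\rfloor}$, which is exactly (\ref{god2}). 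The base case $k=1$ gives $\text{Beta}(1,1)$, i.e.\ $\lambda_{1,1}$ uniform on $[-r,r]$, matching step~1 of Lemma~\ref{lemma:distribution}.

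I expect the main obstacle to be purely bookkeeping: correctly tracking the powers of $r$ introduced by the $z/r^2$ scaling together with the index reversal of the reciprocal polynomial, so that the exponent $r^{2i-k}$ emerges and cleanly matches the $r^{2k-2i}$ denominator of (\ref{god}) after the reflection-coefficient substitution. The floor-function identities $\lfloor k/2+1\rfloor = \lfloor k/2\rfloor + 1$ and $\lfloor (k+1)/2\rfloor = \lfloor (k-1)/2\rfloor + 1$ must also be checked for both parities of $k$ to align the Beta shape parameters with the exponents in (\ref{god2}); everything else is a direct appeal to the already-established Lemma~\ref{lemma:distribution}.
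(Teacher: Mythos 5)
Your proposal is correct and takes the same route the paper intends: the paper justifies Lemma~\ref{lemma:uniform} only by the preceding remark that it is an equivalent reformulation of Lemma~\ref{lemma:distribution}, and you carry out exactly that reduction. Your coefficient expansion (yielding $\lambda_{k,k}=r^k\alpha_k$ and the update matching (\ref{god})) and the Beta-to-(\ref{god2}) change of variables are both verified correctly, so you have in fact supplied the details the paper omits.
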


\alglanguage{pseudocode}
\begin{algorithm*} 
\small
\caption{Generating a uniform sample $[\lambda_{1,L},\ldots,\lambda_{L,L}]^\T$ within $R_L(r)$}
\label{algo:gen}
\begin{algorithmic}[1]
\INPUT $L, r , \Lambda_0 (z) = 1$.
\OUTPUT $\Lambda_L(z) = z^L+ \sum_{\ell=1}^{L} \lambda_{\ell,L} z^{L-\ell} $.
  \For{ $k=1 \to L$ }
    \State Draw $\beta_k$ independently from the beta distribution $\beta_k \sim \text{Beta}(\lfloor k/2+1 \rfloor,\lfloor (k+1)/2 \rfloor)$
    \State  Let $\alpha_k = 2 \beta_k - 1$ and
        $\Lambda_k (z) = z\Lambda_{k-1} (z) + r^k \alpha_k \overline{ \Lambda_{k-1} } (z/r^2).$
  \EndFor
\end{algorithmic}
\end{algorithm*}

Fig.~\ref{randomPoly} illustrates the filters randomly generated from $R_2(r)$ with $r=0.6,0.8,1$. 
The centers of a two-clustering obtained using Algorithm~\ref{algo:cluster} are also shown in this figure. 
These centers are calculated based on the average of 20 random instances, each with 1000 samples.
Fig.~\ref{Ref_Curve} shows the reference curves for $r=0.6,0.8,1$ and $L=4$. 

\begin{figure}[t]
\centering
  \includegraphics[width=0.97\linewidth]{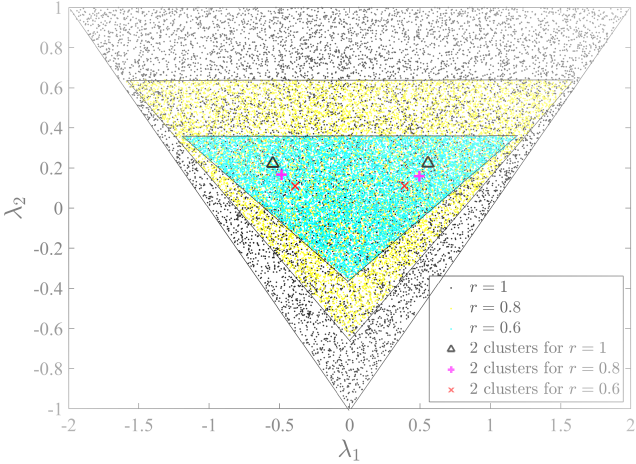}
  \captionof{figure}{10000 independent and uniformly distributed filters of $L=2$ 
 				 and the centers of two clusters, with $r=0.6,0.8,1$.}
  \label{randomPoly}
\hspace{0.2in}
\end{figure}

\begin{figure}[t]
  \centering
  \includegraphics[width=0.97\linewidth]{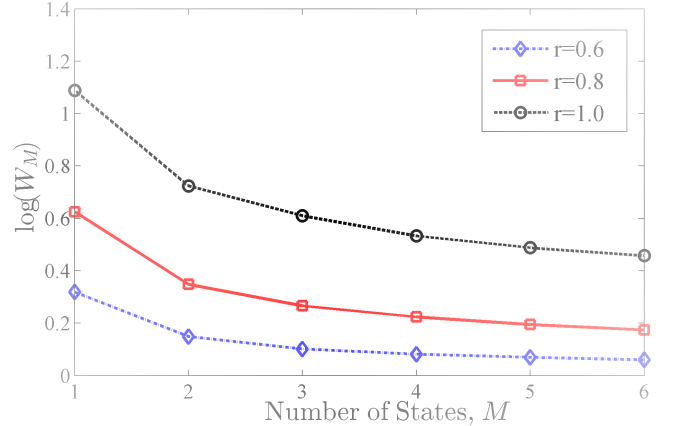}
  \vspace{0.05in}
  \captionof{figure}{The reference curves for $r=0.6,0.8,1$, $L=4$, which are obtained based on $Iter=32, F=1000$ (see Algorithm~\ref{algo:gap}).}
  \label{Ref_Curve}
\end{figure}

\section{Model} \label{sec:model}



A popular way to describe the switching behavior between different states is to assume that the transition between the states follows a first-order Markov process.
In this section, we adopt this assumption to formulate a parametric multi-state AR model for illustration purpose, even though the model selection criterion proposed in Section~\ref{sec:gap} is applicable to other multi-state AR models. 

\subsection{Notations and Formulations} \label{subsec:EM}


%
%
Let $S_m$ denote the set of data points $x^{(n)}$ that are generated from state $m$.
Suppose that $x^{(-L+1)}, \ldots, x^{(0)}$ are fixed and known.
Let $\bm{Z} = \{ \bm z^{(n)} \}_{n=1}^N$ and $\bm{Y} = \{\bm y^{(n)} \}_{n=1}^N$ be a sequence of missing (unobserved) indicators, where $\bm z^{(n)}$ is a $M\times M$ matrix, $\bm y^{(n)}$ is a $M\times 1$ vector, and
\begin{align*}
z^{(n)}_{m m'}&=\left\{
\begin{aligned}
1 & \textrm{ if $x^{(n-1)} \in  S_{m}$ and $x^{(n)} \in  S_{m'}$}, \\
0 & \textrm{ otherwise},
\end{aligned}
\right. \\
y^{(n)}_{m}&=\left\{
\begin{aligned}
1 & \textrm{ if $x^{(n)} \in S_{m}$}, \\
0 & \textrm{ otherwise}.
\end{aligned}
\right.
\end{align*}
Clearly, $\bm z^{(n)} = \bm y^{(n-1)} (\bm y^{(n)})^\T$.
We note that $\bm y^{(n)}$ is a binary vector of length $M$ containing a unique ``$1$'';  with a slight abuse of notation $y^{(n)}$ is the location of that ``$1$''.
We assume that $\{y^{(n)}\}_{n=1}^N$ is a Markov chain with  transition probability matrix $\bm T$, where $P(x^{(n)} \in S_m, x^{(n+1)} \in S_{m'}) = T_{m m'}$, and
$y^{(1)} $ is drawn from $\mathcal{M} (\alpha_1, \ldots,  \alpha_M)$, where $\mathcal{M}$ denotes the family  of multinomial distributions.
In other words, the assumed data generating process (given a fixed $M$) is:
\begin{align}
y^{(n)} &\sim
  \begin{cases}
   \mathcal{M}( \alpha_1, \ldots,  \alpha_M)  & \text{if } n=1 ,\\
   \mathcal{M}( T_{y^{(n-1)} 1}, \ldots,  T_{y^{(n-1)} M})        & \text{otherwise},
  \end{cases}
\\ X^{(n)} &\sim \mathcal{N}( - \bm \gamma_{y^{(n)}}^T \bm x^{(n)} , 
\sigma_{y^{(n)}}^2), \quad n =2,\ldots, N. \label{model}
\end{align}

Let $\Theta = \{ \bm \gamma_m, \sigma_m^2,  T_{mm'},  m,m'=1,\ldots,M \}$ be the set of unknown parameters to be estimated, 
where $\bm \gamma_m$ is of length $L+1$ (including the constant term).
Though computing the maximum-likelihood estimation (MLE) of the above probabilistic model (\ref{model}) is not tractable, it can be approximated by a local maximum via the EM algorithm \cite{dempster1977maximum}.
The EM algorithm produces a sequence of estimates by the recursive application of E-step and M-step to the complete log-likelihood until a predefined convergence criterion is achieved.
The complete log-likelihood can be written as 
\begin{align}
   \sum\limits_{n=1}^N \log p(x^{(n)} \mid \bm x^{(n)})
  &= \sum\limits_{n=1}^N \sum\limits_{m,m'=1}^N z^{(n)}_{m m'} \left( \log \left( \frac{T_{m m'}}{\sqrt{2\pi }\sigma_{m'}}\right) \right. \nonumber \\
  &\left. +  \frac{\left(x^{(n)} - \bm \gamma_{m'}^T \bm x^{(n)}\right)^2}{2 \sigma^2_{m'}}  \right)  . \label{loglik}
\end{align}
For brevity, we provide the EM  formulas below without derivation. 
In the \emph{E-step}, we obtain a function of unknown parameters by taking the expectation of (\ref{loglik}) with respect to the missing data $\bm Y$ and  $\bm Z$ given the most updated parameters,  
\begin{align} \label{eq5}
Q( \Theta \mid \bm X,  \Theta^{\textrm{old}})
  &= \sum\limits_{n=1}^N \sum\limits_{m,m'=1}^N w^{(n)}_{m m'} \left( \log \left( \frac{T_{m m'}}{\sqrt{2\pi }\sigma_{m'}}\right) \right. \nonumber \\ 
  &\left. +  \frac{\left(x^{(n)} - \bm \gamma_{m'}^T \bm x^{(n)}\right)^2}{2 \sigma^2_{m'}}  \right),
\end{align}
where
\begin{align}
w^{(n)}_{m m'}
&= E (z^{(n)}_{m m'} \mid   \Theta^{\textrm{old}} ) = P(y^{(n-1)} = m, y^{(n)} = m' \mid \bm X ) \label{EM_w} 
\end{align}
can be computed recursively. We note that the parameters involved in the right-hand side of (\ref{EM_w}) take values from the last update. 
In the \emph{M-step}, we use the coordinate ascent algorithm to obtain the following local maximum.
The ``old" superscriptions are omitted for brevity.
%
\begin{align}
\bm \gamma_m
&= - \left( \sum\limits_{n=1}^N  \sum\limits_{m'=1}^{M} w_{m' m}^{(n)}  \bm x^{(n)} (\bm x^{(n)})^\T \right)^{-1} \nonumber \\
 & \qquad \left( \sum\limits_{n=1}^N  \sum\limits_{m'=1}^{M}  w_{m' m}^{(n)}  x^{(n)} \bm x^{(n)}  \right) , \label{EM_gamma} \\
%
%
 \sigma_m^2 &= \frac{ \sum\limits_{n=1}^N  \sum\limits_{m'=1}^{M} w_{m' m}^{(n)} \left( x^{(n)} +  \bm \gamma_m^\T \bm x^{(n)} \right)^2  }{\sum\limits_{n=1}^N  \sum\limits_{m'=1}^{M} w_{m' m}^{(n)}},  \label{EM_sigma} \\ 
T_{m m'} &= \frac{ \sum\limits_{n=1}^N w_{m m'}^{(n)} }{ \sum\limits_{m'=1}^{M} \sum\limits_{n=1}^N  w_{m m'}^{(n)} }. \label{EM_T}
\end{align}


\subsection{Initialization of EM} \label{subsection:initialization}

The convergence speed of the EM algorithm strongly depends on the initialization and an improper initialization can cause it to converge to a local maximum which is far away from the global optimum.
A routine technique is to use multiple random initializations and choose the output with the largest likelihood \cite{mclachlan2004finite}, but this can be significantly time consuming. 
Here, we use a new initialization technique to get a fast and reliable convergence for the EM algorithm. 
This technique is based on the fact that for time series obtained in most practical areas, the self-transition probability of the states is usually close to one, i.e., $T_{mm} \approx 1$.
By adopting this assumption, we propose the initialization method in Algorithm~\ref{algo:EM}, which is shown empirically to produce more reliable and efficient EM results.
We note that the ``split'' style rule that appears in line 5 of Algorithm~\ref{algo:EM} is used elsewhere (e.g. s\cite{ueda2000smem}).

\alglanguage{pseudocode}
\begin{algorithm*}
\small
\caption{EM algorithm with the proposed initialization approach}
\label{algo:EM}
\begin{algorithmic}[1]
\INPUT $\bm{X} = \{ x^{(n)} \}_{n=1}^N$.
\OUTPUT  The initial parameters  $\hat{\Theta}_M = \left\{ \hat{\Gamma}_M = \{\hat{\bm \gamma}_m \}_{m=1}^M, \ \hat{\Sigma}_M = \{\hat{\sigma}^2_m\}_{m=1}^M , \  \hat{\bm T}_M = \left((\hat{T}_M)_{mm'}\right)_{m,m'=1}^M \right\}, \ M = 1, \ldots, {M_{\textrm{max}}}$. 
    \For {$M = 1 \to M_{\textrm{max}} $}
    	\For {$n = 1 \to N-N_0+1$ }
      	\State
      	Estimate the AR filter $\bm \hat{\bm \xi}_n$ and the noise variance $\hat{\sigma}^2_n$ from the sequence $\{x^{(n)},\ldots,x^{(n+N_0-1)}\}$ via the least squares 	method. 
    	 \EndFor
    	\State
      	Cluster $\hat{\bm \xi}_1,\ldots, \hat{\bm \xi}_{N-N_0+1}$ into $M$ cluster using $k$-means algorithm and obtain the centers $\hat{\bm \varrho}_1, \ldots, \hat{\bm \varrho}_M$ with the corresponding noise variances $\hat{\varsigma}_1^2, \ldots, \hat{\varsigma}_M^2$.
    Pick up such $\hat{\bm \varrho}_{k}$ ($1 \leq k \leq M$) that maximize the sum of Euclidean distances to $\hat{\bm \gamma}_1, \ldots, \hat{\bm \gamma}_{M-1}$.
        \If {$M>1$}
            	\State
     	 	Let $\hat{\Gamma}_M = \hat{\Gamma}_{M-1} \cup \hat{\bm \varrho}_k, \ \hat{\Sigma}_M  = \hat{\Sigma}_{M-1} \cup \hat{\varsigma}_k^2 , \ (\hat{T}_M)_{mm'} = 1/M \ (m,m' = 1, \ldots, M)$.
	\Else 
		\State  $\hat{\Gamma}_{1} = \hat{\bm \varrho}_k, \ \hat{\Sigma}_1  = \hat{\varsigma}_k^2 , \  \bm T_1 = 1$.
        \EndIf
     \State Run EM updates described in (\ref{EM_w})-(\ref{EM_T}) till certain stopping criterion is achieved.
    \EndFor
\end{algorithmic}
\end{algorithm*}


\section{Numerical Experiments} \label{sec:performance}

This section presents numerical results to evaluate the performance of the proposed technique.

%
%

\begin{figure}
\centering
  \includegraphics[width=0.97\linewidth]{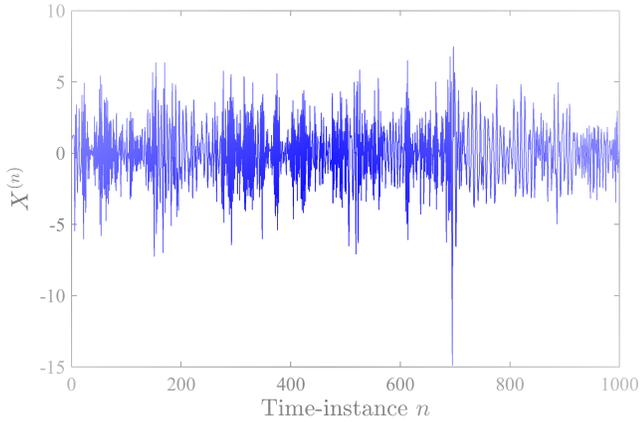}
  \vspace{-0.0in}
  \captionof{figure}{A random instance of multi-state AR time series: $L=4, M=3,  T_{mm}=0.98, \mu_m=0, \sigma_m=1, T_{mm'} = 0.01, m, \, m'=1,\ldots,3, m \neq m'$.}
  \label{Sample}
\end{figure}

\begin{figure}
  \centering
  \includegraphics[width=0.97\linewidth]{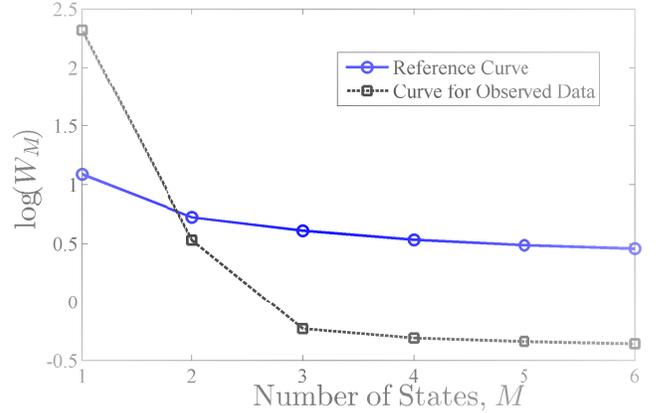}
  \captionof{figure}{The reference curves and the observed MSPE curve for the time series shown in Fig.~\ref{Sample}. The gap between the two curves is maximized at $M=3$. }
  \label{Ref_Curve_Observed}
\end{figure}

\begin{table*}
 \label{tab:results} 
    \vspace{-0.0 in}
    \begin{center}
        \includegraphics[width=5in]{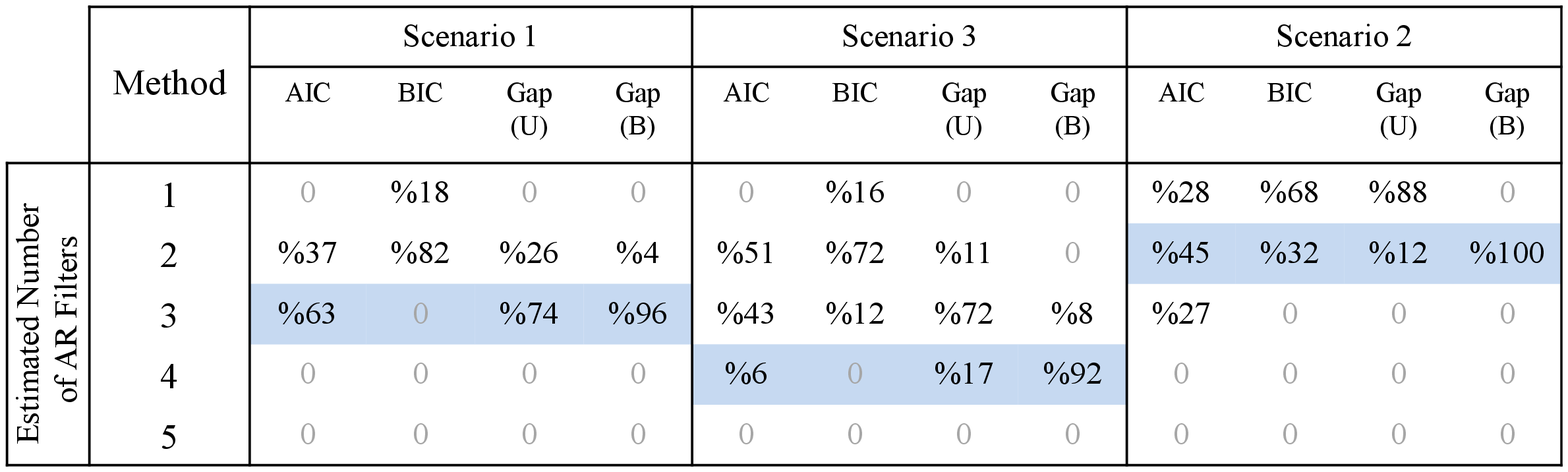}
    \end{center}
    \vspace{-0.1 in}
    \caption{The estimated number of AR filters for three different scenarios using AIC, BIC and Gap statistics (with the true number of filters for each scenario  highlighted)}
\end{table*}

Fig.~\ref{Sample} shows a time series generated from a 3-state AR model with $L=4$. 
The observed MSPE curve associated with the time series shown in Fig. 3 and the reference curve for $L=4$ are plotted in Fig.~\ref{Ref_Curve_Observed}.
The gap between the two curves is maximized at $M=3$. Thus, the selected $M$ is 3. 
In order to compare the performance of the proposed technique with those of AIC and BIC, we have generated synthetic time series under three different scenarios and apply each technique on those data to estimate the number of states. The three scenarios are as follows:\\
\noindent Scenario 1: $(L, M, r)=(4, 3,1)$ ; \\ 
\noindent Scenario 2: $(L, M, r)= (1,4,0.8)$; \\ 
\noindent Scenario 3: $(L, M, r)=(2,2,0.6)$.\\
\noindent For each scenario, $100$ instances of multi-state AR time series of length $N=1000$ are independently generated, each of which consists of $M$ autoregressive filters which are uniformly drawn from the $R_L(r)$ space. For each AR, the mean is uniformly generated from $[-4,4]$ and the variance is assumed to be $1$.
The transition matrix is considered to be $T_{mm}=0.98, T_{mm'} = 0.02/(M-1)$ for $m,m'=1,\ldots,M, m\neq m'$.
For each instance, the model parameters for each fixed $M=1,\ldots,M_{\text{max}}$ are estimated using EM algorithm, where $M_{\text{max}}=6$.
%
Table 1 shows the estimated number of AR filters using AIC, BIC and Gap statistics, where two types of Gap statistics are used to estimate the number of states. In the first type, denoted by Gap (U), the reference curve is generated from sample AR filters that have roots inside the unit circle, and is therefore independent of the data. In the second form of the Gap statistics, represented by Gap (B), the sample filters are restricted to have roots inside a circle with radius $r$, where $r$ is calculated from the data based on Algorithm~\ref{algo:gap}.
According to these results, Gap (B) outperforms AIC and BIC in all three scenarios, and it gives a better estimate of the number of states compared with Gap (U) since it is adaptive to the data.

\section{Conclusions} \label{sec:conclusion}
In this paper we proposed a model selection technique to estimate the number of states in a time series. 
The proposed approach, referred to as the Gap statistics, uses a reference curve to check whether adding a new state significantly decreases the prediction error. 
The reference curve is calculated by clustering uniformly generated stable AR filters. 
Numerical results show that the performance of the proposed model selection criterion surpasses those of AIC and BIC.



\section*{Acknowledgments}
This work is supported by Defense Advanced Research Projects Agency (DARPA) under grant number W911NF-14-1-0508.



\bibliographystyle{unsrt}
\balance
\bibliography{ARMA,HMM,Clustering}

\begin{thebibliography}{10}

\bibitem{goldfeld1973markov}
S.M. Goldfeld and R.E. Quandt.
\newblock A {Markov} model for switching regressions.
\newblock {\em J. Econometrics}, 1(1):3--15, 1973.

\bibitem{Hamilton}
J.D. Hamilton.
\newblock {\em Time Series Analysis}.
\newblock Princeton University Press, 1994.

\bibitem{hamilton2008regime}
J.D. Hamilton.
\newblock Regime-switching models.
\newblock {\em The New Palgrave Dictionary of Economics}, 2, 2008.

\bibitem{lindgren1978markov}
G.~Lindgren.
\newblock Markov regime models for mixed distributions and switching
  regressions.
\newblock {\em Scand. J. Stat.}, pages 81--91, 1978.

\bibitem{baum1970maximization}
L.E. Baum, T.~Petrie, G.~Soules, and N.~Weiss.
\newblock A maximization technique occurring in the statistical analysis of
  probabilistic functions of {Markov} chains.
\newblock {\em Ann. Math. Stat.}, pages 164--171, 1970.

\bibitem{poritz1982linear}
A~Poritz.
\newblock Linear predictive hidden markov models and the speech signal.
\newblock In {\em IEEE Int. Conf. Acoust. Speech Signal Process. (ICASSP)},
  volume~7, pages 1291--1294. IEEE, 1982.

\bibitem{AR-Mixture-00}
C.~S. Wong and W.~K. Li.
\newblock On a mixture autoregressive model.
\newblock {\em J. Roy. Statist. Soc. Ser. B}, 62(1):95--115, Sep 2000.

\bibitem{AIC-73}
H.~Akaike.
\newblock Information theory and an extension of the maximum likelihood
  principle.
\newblock {\em 2nd Int. Sym. Info. Theory}, pages 267--281, Sep 1973.

\bibitem{BIC-78}
G.~Schwarz.
\newblock Estimating the dimension of a model.
\newblock {\em Ann. Statist.}, 6(2):461--464, Mar 1978.

\bibitem{Problem-of-AIC-BIC-96}
G.~Celeux and G.~Soromenho.
\newblock An entropy criterion for assessing the number of clusters in a
  mixture model.
\newblock {\em J. Classification}, 13(2):195--212, 1996.

\bibitem{Gap-Statistics-01}
R.~Tibshirani, G.~Walther, and T.~Hastie.
\newblock Estimating the number of clusters in a data set via the gap
  statistic.
\newblock {\em J. Roy. Statist. Soc. Ser. B}, 63(2):411--423, 2001.

\bibitem{kaufman1987clustering}
L.~Kaufman and P.~Rousseeuw.
\newblock {\em Clustering by means of medoids}.
\newblock North-Holland, 1987.

\bibitem{sturmfels1998introduction}
B.~Sturmfels.
\newblock Introduction to resultants.
\newblock In {\em Proceedings of Symposia in Applied Mathematics}, volume~53,
  pages 25--40, 1998.

\bibitem{Uniform-Polynomial-Gen-97}
E.R. Beadle and P.M. Djuric.
\newblock Uniform random parameter generation of stable minimum-phase real
  {ARMA} ($p$,$q$) processes.
\newblock {\em IEEE Signal Process. Lett.}, 4(9):259--261, Sep 1997.

\bibitem{dempster1977maximum}
A.P. Dempster, N.M. Laird, and D.B. Rubin.
\newblock Maximum likelihood from incomplete data via the {EM} algorithm.
\newblock {\em J. Roy. Statist. Soc. Ser. B}, pages 1--38, 1977.

\bibitem{mclachlan2004finite}
G.~McLachlan and D.~Peel.
\newblock {\em Finite mixture models}.
\newblock John Wiley \& Sons, 2004.

\bibitem{ueda2000smem}
N.~Ueda, R.~Nakano, Z.~Ghahramani, and G.E. Hinton.
\newblock {SMEM} algorithm for mixture models.
\newblock {\em Neural Comput.}, 12(9):2109--2128, 2000.

\end{thebibliography}

\end{document}